\newcommand{\NP}{{\sf NP}}
\newcommand{\PSA}{{\sf PSA}}
\newcommand{\PCA}{{\sf PCA}}
\newcommand{\NSA}{{\sf NSA}}
\newcommand{\NCA}{{\sf NCA}}
\newcommand{\IncPV}{{\sf IncPV}}
\newcommand{\IncNV}{{\sf IncNV}}
\newcommand{\coNP}{{\sf coNP}}
\DeclareMathOperator{\cf}{cf}
\DeclareMathOperator{\ad}{ad}
\DeclareMathOperator{\gr}{gr}
\DeclareMathOperator{\stb}{st}
\DeclareMathOperator{\pr}{pr}
\DeclareMathOperator{\co}{co}
\DeclareMathOperator{\sk}{sk}
\DeclareMathOperator{\cred}{cr}
\DeclareMathOperator{\rej}{rej}
\DeclareMathOperator{\completions}{comp}
\DeclareRobustCommand*\cal{\@fontswitch\relax\mathcal}
\newcommand{\iaf}{\ensuremath{{\cal I}}}
\newcommand{\riaf}{\ensuremath{{\cal R}}}
\newcommand{\af}{\ensuremath{{\cal F}}}
\newtheorem{theorem}{Theorem}
\newtheorem{proposition}[theorem]{Proposition}
\newtheorem{observation}[theorem]{Observation}
\newtheorem{definition}{Definition}
\newtheorem{example}{Example}
\begin{document}

\title{A Note on Rich Incomplete Argumentation Frameworks} 
\author{Jean-Guy Mailly\\LIPADE, Universit\'e de Paris,
  France\\jean-guy.mailly@u-paris.fr}
\date{}
\maketitle

\begin{abstract}
  Recently, qualitative uncertainty in abstract argumentation has
  received much attention. The first works on this topic introduced
  uncertainty about the presence of attacks, then about the presence
  of arguments, and finally combined both kinds of uncertainty. This
  results in the Incomplete Argumentation Framework (IAFs). But
  another kind of uncertainty was introduced in the context of Control
  Argumentation Frameworks (CAFs): it consists in a conflict relation
  with uncertain orientation, {\em i.e.} we are sure that there is an
  attack between two arguments, but the actual direction of the attack
  is unknown. Here, we formally define Rich IAFs, that combine the
  three different kinds of uncertainty that were previously introduced
  in IAFs and CAFs. We show that this new model, although strictly
  more expressive than IAFs, does not suffer from a blow up of
  computational complexity. Also, the existing computational approach
  based on SAT can be easily adapted to the new framework.
\end{abstract}

\section{Introduction}
Abstract argumentation \cite{Dung95} is an important topic in the
Knowledge Representation and Reasoning community. Intuitively, an
abstract argumentation framework (AF) is a directed graph where nodes
are arguments and edges are relations (usually attacks) between these
arguments. The outcome of such an AF is an evaluation of the
arguments' acceptance (through extensions \cite{Dung95,HOFASemantics},
labellings \cite{Caminada06} or rankings \cite{AmgoudB13a}). In such
an AF, the assumption of complete information is made: an argument
that appears in the graph is sure to actually exist, and similarly, an
edge (or the absence of an edge) in the graph means that the attack
between arguments certainly exists (or certainly does not).

The question of how to incorporate uncertainty in AFs has then
arisen. Two kinds of approaches have been proposed. If a quantitative
evaluation of the uncertainty is available, it seems natural to use it
in the definition of reasoning mechanisms. This corresponds ({\em
  e.g.}) to Probabilistic Argumentation Frameworks \cite{LiON11}. But
such a quantitative information about uncertainty may not be
available. The other approach is then the Incomplete Argumentation
Frameworks (IAFs)
\cite{Coste-MarquisDKLM07,BaumeisterNRS18,BaumeisterNR18}, where the
uncertainty is only qualitative. In an IAF, some arguments are
identified as uncertain, {\em i.e.} there is a doubt whether the
argument actually appears in the framework. Similarly, attacks may be
uncertain. However, another form of uncertainty in AFs has been
defined in the literature. Control Argumentation Frameworks
\cite{DimopoulosMM18} integrate uncertainty and argumentation dynamics
\cite{DoutreM18} in a single framework. Besides the two aforementioned
forms of uncertainty, a third one has been proposed: a symmetric
conflict relation is defined, such that there is an uncertainty about
the actual direction of the attack: either it appears in one
direction, or in the other one, or in both directions at the same
time. We investigate how this third kind of uncertainty
can be added to IAFs.

The report is organized as follows. Section~\ref{section:background}
describes the background notions on abstract argumentation and
Incomplete Argumentation Frameworks (IAFs). In
Section~\ref{section:riafs}, we introduce Rich Incomplete
Argumentation Frameworks (RIAFs), that generalize IAFs by adding a new
kind of uncertainty over the attacks. Section~\ref{section:conclusion}
concludes the report by mentioning several interesting research tracks
about (R)IAFs.

\section{Background}\label{section:background}
\subsection{Abstract Argumentation}\label{subsection:background-afs}
Abstract argumentation was introduced in \cite{Dung95}, where
arguments are abstract entities whose origin or internal structure are
ignored. The acceptance of arguments is purely defined from the
relations between them.

\begin{definition}[Abstract AF]
  An {\em abstract argumentation framework} (AF) is a directed graph
  $\af = \langle A, R \rangle$, where $A$ is a set of {\em arguments},
  and $R \subseteq A \times A$ is an {\em attack relation}. 
\end{definition}

We say that $a$ {\em attacks} $b$ when $(a,b) \in R$. If $(b,c) \in R$
also holds, then $a$ {\em defends} $c$ against $b$. Attack and defense
can be adapted to sets of arguments: $S \subseteq A$ attacks
(respectively defends) an argument $b \in A$ if $\exists a \in S$ that
attacks (respectively defends) $b$.

\begin{example}
  Let $\af = \langle A, R\rangle$ be the AF depicted at
  Figure~\ref{fig:example-af}, with $A = \{a,b,c,d,e\}$ and $R =
  \{(b,a), (c,a), (c,d), (d,b), (d,c), (e,a)\}$.
  \begin{figure}[ht]
    \centering
\begin{tikzpicture}[->,>=stealth,shorten >=1pt,auto,node distance=1.4cm,
                thick,main node/.style={circle,draw,font=\bfseries}]
\node[main node] (a) {$a$};
\node[main node] (b) [right of=a] {$b$};
\node[main node] (c) [below of=a] {$c$};
\node[main node] (d) [below of=b] {$d$};
\node[main node] (e) [left of=c] {$e$};

\path[->] (b) edge (a)
    (c) edge (a)
    (d) edge (b)
    (d) edge[bend right] (c)
    (c) edge[bend right] (d)
    (e) edge (a); 
\end{tikzpicture}
    \caption{The AF $\af$\label{fig:example-af}}
  \end{figure}
  Each arrow represents an attack. $d$ defends $a$ against both $b$
  and $c$, since these are attackers of $a$ that are, in turn, both
  attacked by $d$.
\end{example}

In \cite{Dung95}, Dung introduces different semantics to evaluate the
acceptability of arguments. They are based on two basic concepts:
\textit{conflict-freeness} and \textit{admissibility}.

\begin{definition}[Conflict-freeness and Admissibility]
Given $\af = \langle A, R\rangle$, a set of arguments $S \subseteq A$ is:
\begin{itemize}
\item conflict-free iff $\forall a, b \in S$, $(a,b) \not\in R$;
\item admissible iff it is conflict-free, and defends each $a \in S$ against each of its attackers.
\end{itemize}
\end{definition}

We use $\cf(\af)$ and $\ad(\af)$ for denoting the sets of conflit-free
and admissible sets of an argumentation framework $\af$.

The intuition behind these principles is that a set of arguments may
be accepted only if it is internally consistent (conflict-freeness)
and able to defend itself against potential threats (admissibility).
The semantics proposed by Dung are then defined as follows \cite{Dung95}.

\begin{definition}[Extension Semantics]
Given $\af = \langle A, R\rangle$, an admissible set $S \subseteq A$ is:
\begin{itemize}
	\item a complete extension iff it contains every argument that it defends;
	\item a preferred extension iff it is a $\subseteq$-maximal complete extension;
	\item the unique grounded extension iff it is the $\subseteq$-minimal complete extension;
	\item a stable extension iff it attacks every argument in $A \setminus S$.
\end{itemize}
\end{definition}

The sets of extensions of an AF $\af$, for these four semantics, are
denoted (respectively) $\co(\af)$, $\pr(\af)$, $\gr(\af)$ and
$\stb(\af)$.

Based on these semantics, we can define the status of any (set of)
argument(s), namely {\em skeptically accepted} (belonging to each
$\sigma$-extension), {\em credulously accepted} (belonging to some
$\sigma$-extension) and {\em rejected} (belonging to no
$\sigma$-extension). Given an AF $\af$ and a semantics $\sigma$, we
use (respectively) $\sk_\sigma(\af)$, $\cred_\sigma(\af)$ and
$\rej_\sigma(\af)$ to denote these sets of arguments.

\begin{example}
  We consider again $\af$ given at Figure~\ref{fig:example-af}. Its
  extensions for the different semantics, as well as the sets of
  accepted arguments, are given at
  Table~\ref{table:example-semantics}.
\begin{table}[ht]
  \centering
  \begin{tabular}{c|c|c|c}
    $\sigma$ & $\sigma(\af)$ & $\cred(\af)$ & $\sk(\af)$ \\ \hline
    $\co$ & $\{e\}, \{d,e\}, \{b,c,e\}$ & $\{b,c,d,e\}$ & $\{e\}$ \\ \hline
    $\pr$ & $\{d,e\}, \{b,c,e\}$ & $\{b,c,d,e\}$ & $\{e\}$ \\ \hline
    $\gr$ & $\{e\}$ & $\{e\}$ & $\{e\}$ \\ \hline
    $\stb$ & $\{d,e\}, \{b,c,e\}$ & $\{b,c,d,e\}$ & $\{e\}$
  \end{tabular}
  \caption{Extensions and Accepted Arguments of $\af$ for $\sigma \in \{\co, \pr, \gr, \stb\}$ \label{table:example-semantics}}
\end{table}
\end{example}

For more details about argumentation semantics, we refer the
interested reader to \cite{Dung95,HOFASemantics}.

Now, we introduce Incomplete Argumentation Frameworks
\cite{Coste-MarquisDKLM07,BaumeisterNRS18,BaumeisterNR18}, {\em i.e.}
AFs with qualitative uncertainty about the presence of some arguments
or attacks.

\begin{definition}[Incomplete AF]
An {\em Incomplete Argumentation Framework} (IAF) is a tuple $\iaf =
\langle A, A^?, R, R^?\rangle$, where $A$ and $A^?$ are disjoint sets
of arguments, and $R, R^? \subseteq (A \cup A^?) \times (A \cup A^?)$
are disjoint sets of attacks.
\end{definition}

Elements from $A$ and $R$ are certain arguments and attacks, {\em
  i.e.} the agent is sure that they appear in the framework. On the
opposite, $A^?$ and $R^?$ represent uncertain arguments and
attacks. For each of them, there is a doubt about their actual existence.

\begin{example}\label{example:example-iaf}
Let us consider $\iaf = \langle A, A^?, R, R^? \rangle$ given at
Figure~\ref{fig:example-iaf}. We use plain nodes and arrows to
represent certain arguments and attacks, {\em i.e.} $A =
\{a,b,c,d,e\}$ and $R = \{(b,a), (c,a), (d,b), (d,c)\}$. Uncertain
arguments are represented as dashed square nodes ({\em i.e.} $A^? =
\{f\}$) and uncertain attacks are represented as dotted arrows ({\em
  i.e.} $R^? = \{(e,a), (f,d)\}$).

  \begin{figure}[ht]
    \centering
\begin{tikzpicture}[->,>=stealth,shorten >=1pt,auto,node distance=1.4cm,
                thick,main node/.style={circle,draw,font=\bfseries},uncertain/.style={rectangle,draw,dashed,font=\bfseries}]
\node[main node] (a) {$a$};
\node[main node] (b) [right of=a] {$b$};
\node[main node] (c) [below of=a] {$c$};
\node[main node] (d) [below of=b] {$d$};
\node[main node] (e) [left of=c] {$e$};

\path[->] (b) edge (a)
    (c) edge (a)
    (d) edge (b)
    (d) edge (c);

\node[uncertain] (f) [right of=d] {$f$};

\path[->,dotted] (e) edge[bend left] (a);
\path[->,dotted] (f) edge (d);

\end{tikzpicture}
\caption{The IAF $\iaf$\label{fig:example-iaf}}
\end{figure}
\end{example}

The notion of completion in abstract argumentation was first defined
in \cite{Coste-MarquisDKLM07} for Partial AFs ({\em i.e.} IAFs with
$A^? = \emptyset$), and then adapted to IAFs. Intuitively, a
completion is a classical AF which describes a situation of the world
coherent with the uncertain information encoded in the IAF.

\begin{definition}[Completion of an IAF]  \label{def:completion-iaf}
  Given $\iaf = \langle A, A^?, R, R^? \rangle$, a completion of
  $\iaf$ is $\af$ = $\langle A', R' \rangle$, such that
  \begin{itemize}
  \item $A \subseteq A' \subseteq A \cup A^?$;
  \item $R_{\mid A'} \subseteq R' \subseteq R_{\mid A'} \cup
    R^?_{\mid A'}$;
  \end{itemize}
  where $R_{\mid A'} = R \cap (A' \times A')$ (and similarly for $R^?_{\mid A'}$).
\end{definition}

The set of completions of an IAF $\iaf$ is denoted $\completions(\iaf)$.

\begin{example}
We consider again the IAF from Figure~\ref{fig:example-iaf}. Its set
of completions is described at Figure~\ref{fig:example-completions}.
\begin{figure}[ht]
  \centering
  \begin{subfigure}[t]{0.25\textwidth}
    \centering
    \scalebox{0.65}{
\begin{tikzpicture}[->,>=stealth,shorten >=1pt,auto,node distance=1.4cm,
                thick,main node/.style={circle,draw,font=\bfseries},uncertain/.style={rectangle,draw,dashed,font=\bfseries}]
\node[main node] (a) {$a$};
\node[main node] (b) [right of=a] {$b$};
\node[main node] (c) [below of=a] {$c$};
\node[main node] (d) [below of=b] {$d$};
\node[main node] (e) [left of=c] {$e$};

\path[->] (b) edge (a)
    (c) edge (a)
    (d) edge (b)
    (d) edge (c);

\node (f) [right of=d] {};

\end{tikzpicture}
} 
\end{subfigure}
~
\vline
~
  \begin{subfigure}[t]{0.25\textwidth}
    \centering
    \scalebox{0.65}{
\begin{tikzpicture}[->,>=stealth,shorten >=1pt,auto,node distance=1.4cm,
                thick,main node/.style={circle,draw,font=\bfseries},uncertain/.style={rectangle,draw,dashed,font=\bfseries}]
\node[main node] (a) {$a$};
\node[main node] (b) [right of=a] {$b$};
\node[main node] (c) [below of=a] {$c$};
\node[main node] (d) [below of=b] {$d$};
\node[main node] (e) [left of=c] {$e$};

\path[->] (b) edge (a)
    (c) edge (a)
    (d) edge (b)
    (d) edge (c);

\node[main node] (f) [right of=d] {$f$};

\end{tikzpicture}
} 
\end{subfigure}
~
\vline
~
  \begin{subfigure}[t]{0.25\textwidth}
    \centering
    \scalebox{0.65}{
\begin{tikzpicture}[->,>=stealth,shorten >=1pt,auto,node distance=1.4cm,
                thick,main node/.style={circle,draw,font=\bfseries},uncertain/.style={rectangle,draw,dashed,font=\bfseries}]
\node[main node] (a) {$a$};
\node[main node] (b) [right of=a] {$b$};
\node[main node] (c) [below of=a] {$c$};
\node[main node] (d) [below of=b] {$d$};
\node[main node] (e) [left of=c] {$e$};

\path[->] (b) edge (a)
    (c) edge (a)
    (d) edge (b)
    (d) edge (c);

\node[main node] (f) [right of=d] {$f$};
\path[->] (f) edge (d);

\end{tikzpicture}
} 
\end{subfigure}

  \begin{subfigure}[t]{0.25\textwidth}
    \centering
    \scalebox{0.65}{
\begin{tikzpicture}[->,>=stealth,shorten >=1pt,auto,node distance=1.4cm,
                thick,main node/.style={circle,draw,font=\bfseries},uncertain/.style={rectangle,draw,dashed,font=\bfseries}]
\node[main node] (a) {$a$};
\node[main node] (b) [right of=a] {$b$};
\node[main node] (c) [below of=a] {$c$};
\node[main node] (d) [below of=b] {$d$};
\node[main node] (e) [left of=c] {$e$};

\path[->] (b) edge (a)
    (c) edge (a)
    (d) edge (b)
    (d) edge (c);

\node (f) [right of=d] {};
\path[->] (e) edge[bend left] (a);

\end{tikzpicture}
} 
\end{subfigure}
~
\vline
~
  \begin{subfigure}[t]{0.25\textwidth}
    \centering
    \scalebox{0.65}{
\begin{tikzpicture}[->,>=stealth,shorten >=1pt,auto,node distance=1.4cm,
                thick,main node/.style={circle,draw,font=\bfseries},uncertain/.style={rectangle,draw,dashed,font=\bfseries}]
\node[main node] (a) {$a$};
\node[main node] (b) [right of=a] {$b$};
\node[main node] (c) [below of=a] {$c$};
\node[main node] (d) [below of=b] {$d$};
\node[main node] (e) [left of=c] {$e$};

\path[->] (b) edge (a)
    (c) edge (a)
    (d) edge (b)
    (d) edge (c);

\node[main node] (f) [right of=d] {$f$};
\path[->] (e) edge[bend left] (a);

\end{tikzpicture}
} 
\end{subfigure}
~
\vline
~
  \begin{subfigure}[t]{0.25\textwidth}
    \centering
    \scalebox{0.65}{
\begin{tikzpicture}[->,>=stealth,shorten >=1pt,auto,node distance=1.4cm,
                thick,main node/.style={circle,draw,font=\bfseries},uncertain/.style={rectangle,draw,dashed,font=\bfseries}]
\node[main node] (a) {$a$};
\node[main node] (b) [right of=a] {$b$};
\node[main node] (c) [below of=a] {$c$};
\node[main node] (d) [below of=b] {$d$};
\node[main node] (e) [left of=c] {$e$};

\path[->] (b) edge (a)
    (c) edge (a)
    (d) edge (b)
    (d) edge (c);

\node[main node] (f) [right of=d] {$f$};
\path[->] (e) edge[bend left] (a);
\path[->] (f) edge (d);

\end{tikzpicture}
} 
\end{subfigure}
  \caption{The Completions of $\iaf$ \label{fig:example-completions}}
\end{figure}
\end{example}

The number of completions of an IAF $\iaf = \langle A, A^?, R,
R^?\rangle$ is bounded by $2^n$, with $n = |A^?| + |R^?|$. However,
this upper bound may not be reached, as it is the case in the
previous example. Indeed, the uncertain attack $(f,d)$ cannot appear
in completions where the uncertain argument $f$ does not appear.

\subsection{Reasoning with Incomplete AFs}\label{subsection:background-iafs}
To conclude this section, let us introduce the different reasoning
problems for IAFs that have been studied in the literature, as well as
their complexity.\footnote{We suppose that the reader is familiar with
basic concepts of computational complexity, like (non-)deterministic
polynomial algorithms, and the classes of the polynomial hierarchy: P,
\NP, \coNP, $\Sigma_k^P$, $\Pi_k^P$, where $k \in
\mathbb{N}$. Otherwise, we refer the interested reader to, {\em e.g.},
\cite{AroraBarak09}.}

They are the adaptation to IAFs of three classical
reasoning problems for AFs:
\begin{itemize}
  \item Verification: given an AF, a set of arguments, and a
    semantics, is the set an extension of the AF under the chosen
    semantics?
  \item Credulous acceptance: given an AF, an argument, and a
    semantics, is the argument a member of some extension under the
    chosen semantics?
  \item Skeptical acceptance: given an AF, an argument, and a
    semantics, is the argument a member of each extension under the
    chosen semantics?
\end{itemize}

Adapting these problems to IAFs requires to take into account the set
of completions. Indeed, an argument being accepted in one completion
is much less demanding than being accepted in all the
completions. This is why there are two variants of these problems for
IAFs: the possible and the necessary variant. The definition of the
possible variant quantifies existentially over the set of completions,
while the necessary variant quantifies universally.

Verification for IAFs was first studied in \cite{BaumeisterNRS18b}:
\begin{description}
\item[$\sigma$-\IncPV] Given $\iaf = \langle A, A^?, R, R^?\rangle$ an
  IAF and $S \subseteq A \cup A^?$, is there a completion $\af =
  \langle A', R' \rangle$ such that $S \cap A'$ is a
  $\sigma$-extension of $\af$?
\item[$\sigma$-\IncNV] Given $\iaf = \langle A, A^?, R, R^?\rangle$ an
  IAF and $S \subseteq A \cup A^?$, for each completion $\af =
  \langle A', R' \rangle$, is $S \cap A'$ a $\sigma$-extension of $\af$?
\end{description}

In \cite{FazzingaFF20}, a set of arguments for which the answer to
$\sigma$-\IncPV (respectively $\sigma$-\IncNV) is called a possible
(respectively necessary) $i$-extension. The authors identify some
issues with this definition (for instance, a set of arguments could be
identified as an $i$-extension even if it is not conflict-free). To
remedy this issue, they define so-called $i^*$-extensions, and the
corresponding verification problems:
\begin{description}
\item[$\sigma$-\IncPV$^*$] Given $\iaf = \langle A, A^?, R, R^?\rangle$ an
  IAF and $S \subseteq A \cup A^?$, is there a completion $\af =
  \langle A', R' \rangle$ such that $S$ is a
  $\sigma$-extension of $\af$?
\item[$\sigma$-\IncNV$^*$] Given $\iaf = \langle A, A^?, R, R^?\rangle$ an
  IAF and $S \subseteq A \cup A^?$, for each completion $\af =
  \langle A', R' \rangle$, is $S$ a $\sigma$-extension of $\af$?
\end{description}
We refer the interested reader to \cite{FazzingaFF20} for a detailled
discussion of the difference between $i$-extensions and
$i^*$-extensions. 

Finally, the (possible and necessary) variants of credulous and skeptical acceptance are studied
in \cite{BaumeisterNR18}:
\begin{description}
\item[$\sigma$-\PCA] Given $\iaf = \langle A, A^?, R, R^?\rangle$ an
  IAF and $a \in A$, is there a completion $\af =
  \langle A', R' \rangle$ such that $a$ is credulously accepted in
  $\af$ under $\sigma$?
\item[$\sigma$-\NCA] Given $\iaf = \langle A, A^?, R, R^?\rangle$ an
  IAF and $a \in A$, for each completion $\af =
  \langle A', R' \rangle$, is $a$ a credulously accepted in $\af$
  under $\sigma$?
\item[$\sigma$-\PSA] Given $\iaf = \langle A, A^?, R, R^?\rangle$ an
  IAF and $a \in A$, is there a completion $\af =
  \langle A', R' \rangle$ such that $a$ is skeptically accepted in
  $\af$ under $\sigma$?
\item[$\sigma$-\NSA] Given $\iaf = \langle A, A^?, R, R^?\rangle$ an
  IAF and $a \in A$, for each completion $\af =
  \langle A', R' \rangle$, is $a$ a skeptically accepted in $\af$
  under $\sigma$?
\end{description}

Now, let us give the complexity of these problems under several classical
semantics. $\mathcal{C}$-c means that the problem is complete for the
complexity class $\mathcal{C}$, under the given semantics.

{\small
\begin{table}[ht]
  \centering
  \begin{tabular}{c||c|c||c|c||c|c||c|c}
    $\sigma$ & \IncPV & \IncNV & \IncPV$^*$ & \IncNV$^*$ & \PCA & \NCA
    & \PSA & \NSA \\ \hline \hline
    $\ad$ & \NP-c & P & P & P & \NP-c & $\Pi_2^P$-c
    & trivial & trivial \\ \hline
    $\stb$ & \NP-c & P & P & P & \NP-c & $\Pi_2^P$-c
    & $\Sigma_2^P$-c & \coNP-c \\ \hline
    $\co$ & \NP-c & P & P & P & \NP-c & $\Pi_2^P$-c
    & \NP-c & \coNP-c \\ \hline
    $\gr$ & \NP-c & P & P & P & \NP-c & \coNP-c
    & \NP-c & \coNP-c \\ \hline
    $\pr$ & $\Sigma_2^P$-c & \coNP-c & $\Sigma_2^P$-c & \coNP-c & \NP-c & $\Pi_2^P$-c
    & $\Sigma_3^P$-c & $\Pi_2^P$-c
  \end{tabular}
  \caption{Complexity of IAFs for Various Problems
    under $\sigma \in \{\ad,\stb,\co,\gr,\pr\}$ \label{table:complexity-iafs}}
\end{table}
}

\section{Rich Incomplete Argumentation Frameworks}\label{section:riafs}
\subsection{Definition and Expressivity of RIAFs}

Now, we enrich the definition of IAFs.

\begin{definition}[Rich IAF]
  A {\em Rich Incomplete Argumentation Framework} \linebreak (RIAF) is a tuple
  $\riaf = \langle A, A^?, R, R^?, \leftrightarrow^?\rangle$, where
  $A$ and $A^?$ are disjoint sets of arguments, and
  $R, R^?, \leftrightarrow^? \subseteq (A \cup A^?) \times (A \cup
  A^?)$ are disjoint sets of attacks, such that $\leftrightarrow^?$ is
  symmetric.
\end{definition}

The new relation $\leftrightarrow^?$ is borrowed from Control
Argumentation Frameworks \cite{DimopoulosMM18}. It is a symmetric (uncertain) conflict
relation: if $(a,b) \in \leftrightarrow^?$, then we are sure that
there is a conflict between $a$ and $b$, but not of the direction of
the attack. This new relation impacts the definition of completions.

\begin{definition}[Completion of a RIAF]
  Given $\riaf = \langle A, A^?, R, R^?, \leftrightarrow^? \rangle$, a
  completion of $\riaf$ is $\af$ = $\langle A', R' \rangle$, such that
  \begin{itemize}
  \item $A \subseteq A' \subseteq A \cup A^?$;
  \item $R_{\mid A'} \subseteq R' \subseteq R_{\mid A'} \cup R^?_{\mid A'} \cup \leftrightarrow^?_{\mid A'}$;
  \item if $(a,b) \in \leftrightarrow^?_{\mid A'}$, then $(a,b) \in R'$ or
    $(b,a) \in R'$ (or both);
  \end{itemize}
  where $R_{\mid A'} = R \cap (A' \times A')$ (and similarly for $R^?_{\mid A'}$ and $\leftrightarrow^?_{\mid A'}$).
\end{definition}

Again, we use $\completions(\riaf)$ to denote the set of completions
of a RIAF $\riaf$.

\begin{example}
We present a slight modification of the IAF from
Example~\ref{example:example-iaf}, where the (certain) attack $(b,a)$
is replaced by a symmetric uncertain conflict between $a$ and $b$. The
resulting RIAF is given at Figure~\ref{fig:example-riaf}.
  
  \begin{figure}[ht]
    \centering
\begin{tikzpicture}[->,>=stealth,shorten >=1pt,auto,node distance=1.4cm,
                thick,main node/.style={circle,draw,font=\bfseries},uncertain/.style={rectangle,draw,dashed,font=\bfseries}]
\node[main node] (a) {$a$};
\node[main node] (b) [right of=a] {$b$};
\node[main node] (c) [below of=a] {$c$};
\node[main node] (d) [below of=b] {$d$};
\node[main node] (e) [left of=c] {$e$};

\path[->] (c) edge (a)
    (d) edge (b)
    (d) edge (c);

\node[uncertain] (f) [right of=d] {$f$};

\path[->,dotted] (e) edge[bend left] (a);
\path[->,dotted] (f) edge (d);
\path[<->,dashed] (a) edge (b);

\end{tikzpicture}
\caption{The RIAF $\riaf$\label{fig:example-riaf}}
\end{figure}

We do not give here the full set of completions of $\riaf$. Let us
focus on one option for each of $(e,a)$, $(f,d)$ and $f$, and we only
illustrate the three options for $(a,b)$. These three completions are
given at Figure~\ref{fig:example-riaf-completions}.

\begin{figure}[ht]
  \centering
  \begin{subfigure}[t]{0.25\textwidth}
    \centering
    \scalebox{0.65}{
\begin{tikzpicture}[->,>=stealth,shorten >=1pt,auto,node distance=1.4cm,
                thick,main node/.style={circle,draw,font=\bfseries},uncertain/.style={rectangle,draw,dashed,font=\bfseries}]
\node[main node] (a) {$a$};
\node[main node] (b) [right of=a] {$b$};
\node[main node] (c) [below of=a] {$c$};
\node[main node] (d) [below of=b] {$d$};
\node[main node] (e) [left of=c] {$e$};

\path[->] (b) edge (a)
    (c) edge (a)
    (d) edge (b)
    (d) edge (c);

\node[main node] (f) [right of=d] {$f$};
\path[->] (e) edge[bend left] (a);
\path[->] (f) edge (d);

\end{tikzpicture}
} 
\end{subfigure}
~
\vline
~
  \begin{subfigure}[t]{0.25\textwidth}
    \centering
    \scalebox{0.65}{
\begin{tikzpicture}[->,>=stealth,shorten >=1pt,auto,node distance=1.4cm,
                thick,main node/.style={circle,draw,font=\bfseries},uncertain/.style={rectangle,draw,dashed,font=\bfseries}]
\node[main node] (a) {$a$};
\node[main node] (b) [right of=a] {$b$};
\node[main node] (c) [below of=a] {$c$};
\node[main node] (d) [below of=b] {$d$};
\node[main node] (e) [left of=c] {$e$};

\path[->] (a) edge (b)
    (c) edge (a)
    (d) edge (b)
    (d) edge (c);

\node[main node] (f) [right of=d] {$f$};
\path[->] (e) edge[bend left] (a);
\path[->] (f) edge (d);

\end{tikzpicture}
} 
\end{subfigure}
~
\vline
~
  \begin{subfigure}[t]{0.25\textwidth}
    \centering
    \scalebox{0.65}{
\begin{tikzpicture}[->,>=stealth,shorten >=1pt,auto,node distance=1.4cm,
                thick,main node/.style={circle,draw,font=\bfseries},uncertain/.style={rectangle,draw,dashed,font=\bfseries}]
\node[main node] (a) {$a$};
\node[main node] (b) [right of=a] {$b$};
\node[main node] (c) [below of=a] {$c$};
\node[main node] (d) [below of=b] {$d$};
\node[main node] (e) [left of=c] {$e$};

\path[->] (b) edge[bend left] (a)
    (a) edge[bend left] (b)
    (c) edge (a)
    (d) edge (b)
    (d) edge (c);

\node[main node] (f) [right of=d] {$f$};
\path[->] (e) edge[bend left] (a);
\path[->] (f) edge (d);

\end{tikzpicture}
} 
\end{subfigure}
\caption{Three Completions of $\riaf$ \label{fig:example-riaf-completions}}
\end{figure}
Similarly, for each other configuration of $(e,a)$, $(f,d)$ and $f$
({\em i.e.} each completion at Figure~\ref{fig:example-completions}),
there are three options for the conflict between $a$ and $b$, leading
to three different completions.
\end{example}

\begin{observation}
Let us notice that $\leftrightarrow^?$ is not defined as a symmetric
relation in \cite{DimopoulosMM18}. However, its meaning imposes the
relation to be symmetric. Indeed, $(a,b) \in \leftrightarrow^?$ means
that there is a conflict between $a$ and $b$ whose direction is
uncertain. Formally, it means that ({\em ceteris paribus}) there are
three completions with (respectively) $(a,b) \in R'$, or $(b,a) \in
R'$, or both. This is obviously equivalent to ``there is a conflict
between $b$ and $a$ whose direction is uncertain'', {\em i.e.} $(b,a)
\in \leftrightarrow^?$. A non-symmetric relation can be used as a more
compact representation of its symmetric counterpart.
\end{observation}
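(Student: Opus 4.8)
The plan is to extract the two facts that the Observation asserts directly from the definition of a completion, rather than to prove anything substantial: the content is definitional. The first fact is that a single directed pair $(a,b) \in \leftrightarrow^?$ forces, all other choices being fixed, exactly three completions — one in which $R'$ contains only $(a,b)$, one in which it contains only $(b,a)$, and one in which it contains both. The second is that this is word-for-word the behaviour one obtains from $(b,a) \in \leftrightarrow^?$, so the two directed pairs are interchangeable and a non-symmetric relation is merely a compact encoding of its symmetric closure.

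For the first fact I would fix a set $A'$ with $a,b \in A'$ and fix every choice made by the completion conditions except the membership of $(a,b)$ and $(b,a)$ in $R'$. Only two clauses constrain this pair. The upper-bound clause $R' \subseteq R_{\mid A'} \cup R^?_{\mid A'} \cup \leftrightarrow^?_{\mid A'}$ permits both $(a,b)$ and $(b,a)$ to lie in $R'$, since both belong to $\leftrightarrow^?_{\mid A'}$ once the relation is read symmetrically; and the third clause requires $(a,b) \in R'$ or $(b,a) \in R'$. Together these say exactly that $R' \cap \{(a,b),(b,a)\}$ is a non-empty subset of $\{(a,b),(b,a)\}$, leaving precisely the three possibilities $\{(a,b)\}$, $\{(b,a)\}$ and $\{(a,b),(b,a)\}$.

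For the second fact, observe that the third clause instantiated at $(a,b)$ reads ``$(a,b) \in R'$ or $(b,a) \in R'$'', while instantiated at $(b,a)$ it reads ``$(b,a) \in R'$ or $(a,b) \in R'$'': these are the same disjunction. The contribution of $\leftrightarrow^?_{\mid A'}$ to the upper bound is likewise identical once either orientation is present. Hence whether $\leftrightarrow^?$ contains $(a,b)$, $(b,a)$, or both, the set $\completions(\riaf)$ is unchanged — only the symmetric closure of $\leftrightarrow^?$ enters the definition. This is exactly the claim that the non-symmetric relation of \cite{DimopoulosMM18} can be stored as a single directed pair and interpreted as its symmetric counterpart, and it justifies requiring $\leftrightarrow^?$ to be symmetric in the definition of a RIAF.

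The one point I expect to need care is the disjointness requirement on $R$, $R^?$ and $\leftrightarrow^?$ together with the upper-bound clause. If the compact (non-symmetric) source were to place $(a,b)$ in $\leftrightarrow^?$ while $(b,a)$ already sits in $R$ or $R^?$, the naive symmetric closure would violate disjointness and the intended ``three completions'' reading would be ill-posed. I expect this to be the only obstacle, and it is a genuine well-formedness condition rather than a difficulty in the argument: once one assumes — as the symmetric definition of a RIAF automatically guarantees — that the symmetric closure of $\leftrightarrow^?$ is disjoint from $R \cup R^?$, the two facts above hold verbatim and the Observation follows.
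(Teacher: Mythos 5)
Your second paragraph is correct and is essentially the paper's own justification (the Observation carries no separate proof beyond its inline reasoning): for a symmetric pair $(a,b),(b,a) \in \leftrightarrow^?$, disjointness of $R$, $R^?$ and $\leftrightarrow^?$ puts both orientations outside $R \cup R^?$, so the only constraints on $R' \cap \{(a,b),(b,a)\}$ are the upper bound (any subset) and the third clause (non-empty subset), giving exactly the three completions \emph{ceteris paribus}. The disjointness caveat in your final paragraph is also a legitimate well-formedness point that the paper glosses over.

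The problem is in your third paragraph. The claim that ``the contribution of $\leftrightarrow^?_{\mid A'}$ to the upper bound is likewise identical once either orientation is present'' is false under a literal reading of the completion definition: if $\leftrightarrow^?$ contained only $(a,b)$, then $\leftrightarrow^?_{\mid A'} = \{(a,b)\}$, the upper bound $R' \subseteq R_{\mid A'} \cup R^?_{\mid A'} \cup \leftrightarrow^?_{\mid A'}$ would exclude $(b,a)$ from every completion, and the third clause would then force $(a,b) \in R'$ --- yielding exactly \emph{one} completion for that pair instead of three. So the set of completions is not invariant under replacing a non-symmetric relation by its symmetric closure when the definition is applied verbatim (indeed, a non-symmetric $\leftrightarrow^?$ is not even a RIAF under the paper's definition); your ``second fact'' holds only if one first stipulates that the non-symmetric relation is to be read as its closure, which is precisely the convention being justified, making the argument circular. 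The paper avoids this by arguing at the level of intended meaning rather than of the formal clauses: $(a,b) \in \leftrightarrow^?$ is \emph{defined} to mean ``ceteris paribus, there are three completions, with $(a,b)$, with $(b,a)$, or with both'', and that condition is manifestly symmetric in $a$ and $b$; symmetry is then built into the RIAF definition as a requirement rather than derived as a consequence, and the ``compact representation'' remark is a statement about this semantic convention, not about clause-invariance of the completion definition.
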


Now, we prove that RIAFs are strictly more expressive than IAFs. Said
otherwise, it means that the new relation $\leftrightarrow^?$ cannot
be equivalently represented with a combination of fixed and uncertain
attacks.

\begin{proposition}[Relative Expressivity of IAFs and RIAFs]\label{prop:relative-expressivity}\ \\
  \vspace*{-0.3cm}
\begin{itemize}
\item For any IAF $\iaf$, there exists a RIAF $\riaf$ such that
  $\completions(\iaf) = \completions(\riaf)$.
\item There exists a RIAF $\riaf$ such that there is no IAF $\iaf$
  with $\completions(\iaf) = \completions(\riaf)$.
\end{itemize}
\end{proposition}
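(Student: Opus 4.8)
The plan is to treat the two bullets separately; the first is essentially immediate, and the real content is the second, where I must exhibit a RIAF whose completion set cannot arise from any IAF.

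For the first item I would take the RIAF $\riaf = \langle A, A^?, R, R^?, \emptyset\rangle$ obtained from the given IAF $\iaf = \langle A, A^?, R, R^?\rangle$ by setting the symmetric uncertain conflict to $\emptyset$. When $\leftrightarrow^? = \emptyset$, the term $\leftrightarrow^?_{\mid A'}$ in the second condition of the RIAF completion vanishes and the third condition becomes vacuous, so the definition of a completion of $\riaf$ collapses exactly onto Definition~\ref{def:completion-iaf}. Hence $\completions(\riaf) = \completions(\iaf)$, as required.

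For the second item the key is to isolate a structural property enjoyed by every IAF completion set but violated by some RIAF completion set. The property I would use is the existence of a unique $\preceq$-minimum completion, where $\langle A_1, R_1\rangle \preceq \langle A_2, R_2\rangle$ means $A_1 \subseteq A_2$ and $R_1 \subseteq R_2$. For any IAF $\iaf = \langle A, A^?, R, R^?\rangle$ the AF $\langle A, R_{\mid A}\rangle$ is itself a completion (take $A' = A$ and the minimal allowed edge relation $R' = R_{\mid A}$), and it is a subgraph of every other completion, since any completion $\langle A', R'\rangle$ satisfies $A \subseteq A'$ and therefore $R_{\mid A} \subseteq R_{\mid A'} \subseteq R'$. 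Thus $\langle A, R_{\mid A}\rangle$ is the unique $\preceq$-minimum of $\completions(\iaf)$.

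I would then take $\riaf = \langle \{a,b\}, \emptyset, \emptyset, \emptyset, \{(a,b),(b,a)\}\rangle$. All its completions have argument set $\{a,b\}$, and the third condition forces at least one of $(a,b),(b,a)$ into $R'$, so the completions are exactly the three AFs with edge relations $\{(a,b)\}$, $\{(b,a)\}$ and $\{(a,b),(b,a)\}$. None of these three AFs is a subgraph of the other two (indeed $\{(a,b)\}$ and $\{(b,a)\}$ are $\preceq$-incomparable), so this set has no $\preceq$-minimum. Since possessing a minimum element is an intrinsic property of a set of AFs, and $\completions(\iaf) = \completions(\riaf)$ would identify the two sets, no IAF can realize this completion set, which is the claim. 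The step I expect to be the crux is choosing this invariant in the third paragraph: alternatives such as counting completions or closure under union are either not preserved or clumsy to check, whereas the unique-minimum property is easy to verify for IAFs and visibly destroyed by a single symmetric uncertain conflict; everything else is routine checking against the two completion definitions.
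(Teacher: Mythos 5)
Your proof is correct, and while it uses the same witness RIAF $\langle \{a,b\}, \emptyset, \emptyset, \emptyset, \{(a,b),(b,a)\}\rangle$ as the paper, the argument for non-realizability is genuinely different. The paper proceeds by direct contradiction with an exhaustive case analysis on the structure of a hypothetical IAF $\iaf = \langle A, A^?, R, R^?\rangle$: first $A^? = \emptyset$ because all completions share the argument set $\{a,b\}$; then $R = \emptyset$ because no attack occurs in every completion; then each remaining option for $R^?$ fails --- a single uncertain attack misses one of the three completions, both $(a,b)$ and $(b,a)$ in $R^?$ produce a spurious fourth completion with no attack at all, and self-attacks likewise create completions not in the target set. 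You instead establish a structural invariant: every IAF completion set contains a unique $\preceq$-minimum element, namely $\langle A, R_{\mid A}\rangle$, which is a completion and a subgraph of every other completion (since $A \subseteq A'$ forces $R_{\mid A} \subseteq R_{\mid A'} \subseteq R'$); the three completions of the witness RIAF admit no such minimum, since $\{(a,b)\}$ and $\{(b,a)\}$ are $\preceq$-incomparable and their union is above both. Your verification of both halves of the invariant is sound, and antisymmetry of $\preceq$ gives uniqueness for free. What your route buys is generality and economy: it yields a reusable necessary condition for a set of AFs to be the completion set of \emph{any} IAF (directly relevant to the realizability question raised in Section~\ref{section:conclusion}), and it dispenses with the case analysis over $R^?$ and self-attacks in one stroke. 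What the paper's route buys is self-containedness at a very elementary level --- it needs no auxiliary order-theoretic notion, only the definition of completion --- which arguably suits a short note. One small presentational point: your phrase ``pairwise incomparable'' would be inaccurate (the union is comparable to each singleton), but you do not actually claim it; the property you use, absence of an element below all others, is exactly right.
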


\begin{proof}
  The first item is straightforward: any IAF is a RIAF with
  $\leftrightarrow^? = \emptyset$. For the second item, consider
  $\riaf = \langle \{a,b\}, \emptyset, \emptyset, \emptyset, \{(a,b),
  (b,a)\} \rangle$. This RIAF and its three completions are given at
  Figure~\ref{fig:proof-expressivity-riaf}.
\begin{figure}[ht]
  \centering
  \begin{subfigure}[t]{0.25\textwidth}
    \centering
    \newcommand{\shiftpoints}{5pt}
    \begin{tikzpicture}[->,>=stealth,shorten >=1pt,auto,node
      distance=1.4cm, thick,main
      node/.style={circle,draw,font=\bfseries},uncertain/.style={rectangle,draw,dashed,font=\bfseries}]
      \node[main node] (a) {$a$};
      \node[main node] (b) [right of=a] {$b$};
      
      \path[<->,dashed] (a) edge (b);
      
    \end{tikzpicture}
  \end{subfigure}
  ~
  \vline
  ~
    \begin{subfigure}[t]{0.25\textwidth}
    \centering
    \newcommand{\shiftpoints}{5pt}
    \begin{tikzpicture}[->,>=stealth,shorten >=1pt,auto,node
      distance=1.4cm, thick,main
      node/.style={circle,draw,font=\bfseries},uncertain/.style={rectangle,draw,dashed,font=\bfseries}]
      \node[main node] (a) {$a$};
      \node[main node] (b) [right of=a] {$b$};
      
      \path[->] (a) edge (b);
      
    \end{tikzpicture}
  \end{subfigure}

    \begin{subfigure}[t]{0.25\textwidth}
    \centering
    \newcommand{\shiftpoints}{5pt}
    \begin{tikzpicture}[->,>=stealth,shorten >=1pt,auto,node
      distance=1.4cm, thick,main
      node/.style={circle,draw,font=\bfseries},uncertain/.style={rectangle,draw,dashed,font=\bfseries}]
      \node[main node] (a) {$a$};
      \node[main node] (b) [right of=a] {$b$};
      
      \path[->] (b) edge (a);
      
    \end{tikzpicture}
  \end{subfigure}
    ~
  \vline
  ~
    \begin{subfigure}[t]{0.25\textwidth}
    \centering
    \newcommand{\shiftpoints}{5pt}
    \begin{tikzpicture}[->,>=stealth,shorten >=1pt,auto,node
      distance=1.4cm, thick,main
      node/.style={circle,draw,font=\bfseries},uncertain/.style={rectangle,draw,dashed,font=\bfseries}]
      \node[main node] (a) {$a$};
      \node[main node] (b) [right of=a] {$b$};
      
      \path[->] (a) edge[bend left] (b)
      (b) edge[bend left] (a);
      
    \end{tikzpicture}
  \end{subfigure}
  \caption{A RIAF and its Completions \label{fig:proof-expressivity-riaf}}
\end{figure}

Now, let us prove that there is no IAF with the same set of
completions. Reasoning towards a contradiction, suppose that such a
IAF $\iaf = \langle A, A^?, R, R^? \rangle$ exists. Since all the
completions have the same set of arguments $\{a,b\}$, there cannot be
uncertain argument, {\em i.e.} $A^? = \emptyset$.

Let us now consider the different options for $R$ and $R^?$. If
$(a,b) \in R$ (respectively $(b,a) \in R$), then there is an attack
from $a$ to $b$ (respectively from $b$ to $a$) in every
completion. This is not the case. Similarly, there cannot be any self
attack in $R$ (since there is no such attack in any completion). Thus
$R = \emptyset$.

In the case where only $(a,b)$ (respectively $(b,a)$)
belongs to $R^?$, then the completions with $(b,a)$ (respectively
$(a,b)$) do not belong to $\completions(\iaf)$. On the contrary, if
both $(a,b)$ and $(b,a)$ belong to $R^?$, then a fourth completion
where there is no attack between $a$ and $b$ belongs to
$\completions(\iaf)$. Of course, self-attacks in $R^?$ are not
possible, since they would yield addition completions (with the same
self-attack appearing in them).

So we can conclude that $\iaf$ does not exist.
\end{proof}

\subsection{Computational Issues}
\subsubsection{Complexity}
While we have shown that RIAFs are strictly more expressive than IAFs,
now we prove that this expressivity is not at the price of a
complexity blow up. Let us recall that the complexity results for IAFs
\cite{BaumeisterNRS18b,BaumeisterNR18,FazzingaFF20} are summarized at
Table~\ref{table:complexity-iafs}. The fact that any IAF is a RIAF
with $\leftrightarrow^? = \emptyset$ is enough to prove that reasoning
with RIAFs is at least as hard as reasoning with IAFs. But also, we
notice that the upper bounds of the complexity coincides with the
upper bound for IAFs. Roughly speaking, this can be explained by the
fact that non-deterministically guessing a completion is not different
for RIAFs than for IAFs.

\paragraph{Possible Verification} Let us start with \IncPV{} and
\IncPV$^*$. The problem for RIAFs can be solved by the
non-deterministic guess of a completion, and then checking whether the
queried set of arguments $S$ is a $\sigma$-extension of the given
completion (for \IncPV$^*$), or $S \cap A'$ where $A'$ is the set of
arguments that appear in the completion (for \IncPV). Since
verification of an extension in AFs is polynomial for
$\sigma \in \{\ad,\stb,\co,\gr\}$ and \coNP-complete for
$\sigma = \pr$, the results for \IncPV{} from
Table~\ref{table:complexity-iafs} are valid for RIAFs, as well as the
result for \IncPV$^*$ under the preferred semantics. For \IncPV$^*$
under the other semantics, the reasoning from \cite{FazzingaFF20}
applies. First, if there are $(a,b),(b,a) \in \leftrightarrow^?$ such
that $a,b \in S$, then $S$ is not conflict-free in $\riaf$, thus it is
not an extension (for $\sigma = \ad,\stb,\gr,\co$). Otherwise, for
$(a,b),(b,a) \in \leftrightarrow^?$ with $a \in S$ and $b \not\in S$,
include only $(a,b)$ in the completion built by the algorithm.

  \paragraph{Necessary Verification} Then, let us focus on \IncNV$^*$,
  and $\sigma = \pr$. A negative instance for this problem can be
  identified by non-deterministically guessing a completion and a
  superset $S'$ of the queried set of arguments $S$. Then,
  polynomially checking whether $S'$ is admissible allows to conclude
  that $S$ is not necessary a preferred $i^*$-extension. Thus
  \IncNV$^* \in \coNP$. For \IncNV, the reasoning is the same, except
  that $S'$ must be a superset of $S \cap A'$ instead of a superset of
  $S$.
  
  For the other semantics, the reasoning from
  \cite{BaumeisterNRS18b,FazzingaFF20} holds for RIAFs: \IncNV{} and
  \IncNV$^*$ can be solved polynomially by reducing the problem to
  reasoning with Argument-Incomplete AFs, or by constructing the
  adequate completion. If there are
  $(a,b),(b,a) \in \leftrightarrow^?$ such that $a,b \in S$, then $S$
  is not conflict-free in $\riaf$, thus it is not an extension (for
  $\sigma = \ad,\stb,\gr,\co$). Otherwise, for
  $(a,b),(b,a) \in \leftrightarrow^?$ with $a \in S$ and
  $b \not\in S$, include only $(a,b)$ in the Argument-Incomplete AF
  or completion built in the proof.

  \paragraph{Credulous Acceptance }Now, we look at acceptance
  problems. For \PCA, the problem is solved by non-deterministically
  guessing a completion $C$ and a set of arguments $S$ that contains
  the queried argument $a$. Then, it can be checked polynomially
  whether $S$ is a $\sigma$-extension, for
  $\sigma \in \{\ad,\stb,\co,\gr\}$. Moreover, if $S$ is an admissible
  extension, then $a$ also belongs to some preferred extension (since
  each admissible set is included in some preferred extension).

For \NCA, let us non-deterministically guess a completion and check
whether the queried argument is credulously accepted in it. This check
is doable in polynomial time for $\sigma = \gr$, and with a \NP{} oracle
for the other semantics under consideration, hence the result.

\paragraph{Skeptical Acceptance} For \PSA{} and \NSA, the admissible
semantics remains a trivial case: since the empty set is admissible
for any AF, there is no skeptically accepted argument under $\ad$. For
\PSA, we can non-deterministically guess a completion, and check
whether $a$ is skeptically accepted in it. This check is polynomial
for $\sigma \in \{\gr,\co\}$, in \coNP{} for $\sigma = \stb$, and in
$\Pi_2^P$ for $\sigma = \pr$, so we obtain the result. For \NSA, the
reasoning described in \cite{BaumeisterNR18} still holds: checking the
necessary skeptical acceptance of an argument can be represented by a
universal quantification over the set of completions and the set of
sets of arguments ($S$) that do not contain the queried argument
$a$. The universal quantifiers are followed by a (deterministic)
polynomial check for $\sigma \in \{\stb,\co,\gr\}$. For
$\sigma = \pr$, the universal quantifiers are followed by an
existential quantifier over the supersets of $S$, and finally a
(deterministic) polynomial check. Hence the results.

So we can conclude that the complexity of reasoning with RIAFs, for
the various problems introduced in Section~\ref{subsection:background-iafs}
and for $\sigma \in \{\ad,\stb,\co,\gr,\pr\}$, is the same as in the
case of IAFs.

\begin{proposition}
The complexity results for IAFs given at
Table~\ref{table:complexity-iafs} also hold for RIAFs.
\end{proposition}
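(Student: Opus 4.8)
The plan is to establish the two directions separately for every cell of Table~\ref{table:complexity-iafs}: the hardness (lower bound) and the membership (upper bound). The lower bounds are immediate from the first item of Proposition~\ref{prop:relative-expressivity}: every IAF $\iaf$ is the RIAF obtained by setting $\leftrightarrow^? = \emptyset$, and this transformation preserves the set of completions, hence preserves the answer to each of the reasoning problems. Consequently every known hardness result for IAFs lifts verbatim to RIAFs, so no work beyond invoking the embedding is required in this direction.

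The real content lies in the upper bounds, and the strategy is to show that introducing $\leftrightarrow^?$ does not change the cost of the guess-and-check procedures witnessing membership for IAFs. The crucial observation is that a completion of a RIAF is again an ordinary AF, and that guessing one remains a single non-deterministic polynomial-time step: each uncertain directed attack in $R^?$ still contributes two choices (present or absent), while each symmetric pair $(a,b),(b,a) \in \leftrightarrow^?$ contributes exactly three choices (the attack $(a,b)$ alone, $(b,a)$ alone, or both), so a completion is still described by polynomially many bits. Once a completion has been guessed, the subsequent verification or acceptance test is literally the same computation as in the AF/IAF case. Therefore the algorithms sketched above for \IncPV, \IncNV{} (and their $^*$ variants), \PCA, \NCA, \PSA{} and \NSA{} place each RIAF problem in exactly the same class as its IAF counterpart, which together with the matching lower bounds yields completeness.

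The step requiring the most care, and which I expect to be the main obstacle, is preserving the \emph{polynomial-time} reductions used for the ``easy'' cells --- in particular \IncNV{} and \IncNV$^*$ for $\sigma \in \{\ad,\stb,\co,\gr\}$, where the IAF proofs proceed by reducing to reasoning with Argument-Incomplete AFs or by explicitly constructing an adequate completion, rather than by a generic non-deterministic guess. Here the symmetric relation must be handled correctly with respect to conflict-freeness. I would split on the position of each pair $(a,b),(b,a) \in \leftrightarrow^?$ relative to the queried set $S$: if both $a,b \in S$, then every completion necessarily contains an attack between them, so $S$ is not conflict-free and hence not an extension for $\sigma \in \{\ad,\stb,\co,\gr\}$, which can be detected in polynomial time up front; otherwise, when at most one endpoint lies in $S$, one can canonically orient the conflict --- keeping only the attack emanating from the in-$S$ argument --- so that the construction feeding the IAF reduction goes through unchanged.

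Verifying that this canonical orientation is always compatible with the definition of a completion, and that it does not spuriously alter the status of the remaining arguments, is the delicate point; once it is checked, the other cells follow the generic guess-and-check argument, and the proposition follows by combining all cells.
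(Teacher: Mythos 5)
Your proposal takes essentially the same route as the paper: hardness by embedding IAFs as RIAFs with $\leftrightarrow^? = \emptyset$, membership by observing that a completion of a RIAF can still be guessed non-deterministically in a polynomial number of bits (three choices per symmetric pair) followed by exactly the same checks as for IAFs, and, for the polynomial cells (\IncPV$^*$, \IncNV, \IncNV$^*$ under $\ad,\stb,\co,\gr$), the same adaptation of the constructions from the literature --- reject up front when some $\leftrightarrow^?$ pair lies inside $S$, and otherwise orient each pair with exactly one endpoint in $S$ by keeping only the attack leaving $S$. This matches the paper's proof in both structure and level of detail, including the point both leave unverified, namely that this canonical orientation interacts correctly with the existing IAF constructions.
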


\subsubsection{Algorithm}\label{section:algorithm}
Now we show how to adapt the SAT-based algorithms for reasoning with
IAFs \cite{NiskanenNJR20} to RIAFs.

Roughly speaking, the encoding is made of one part that represents the
structure of the IAF, {\em i.e.} the existence of (uncertain or not)
arguments and attacks; and one part that maps this structure with the
arguments acceptance evaluation (with respect to a chosen
semantics). A slight modification to take into account
$\leftrightarrow^?$ is enough to reason with RIAFs instead of
IAFs. Following the definition of encodings in \cite{NiskanenNJR20},
we define, for a RIAF
$\riaf = \langle A, A^?, R, R^?, \leftrightarrow^?\rangle$ the Boolean
variables:
\begin{itemize}
\item $y_a$ is true if and only if $a \in A'$;
\item $r_{a,b}$ is true if and only if $(a,b) \in R'$;
\item $x_a$ is true if and only if $a \in S$, for some $S \in \sigma(\af)$;
\end{itemize}
where $\af = \langle A', R'\rangle$ is a completion of $\riaf$
corresponding to the $y_a$ and $r_{a,b}$ variables. While $y_a$ and
$r_{a,b}$ are necessarily true for $a \in A$ and $(a,b) \in R$, they
may be true or false for $a \in A^?$ and $(a,b) \in R^?$. We also need
to specify that either $r_{a,b}$ or $r_{b,a}$ is true if $(a,b),(b,a)
\in \leftrightarrow^?$. Finally, an argument $a$ must be in the
completion ({\em i.e.} $y_a$ is true) in order to appear in an
extension ({\em i.e.} $x_a$ is true) or in some attacks ({\em i.e.}
$r_{a,b}$ or $r_{b,a}$ is true for some $b$). This corresponds to the
following formula, for $\riaf = \langle A, A^?, R, R^?,
\leftrightarrow^? \rangle$:
\[
  \begin{array}{rcl}
    \phi_?(\riaf) & = & (\bigwedge_{a \in A} y_a ) \wedge (\bigwedge_{(a,b)
    \in R} r_{a,b}) \wedge (\bigwedge_{(a,b) \in \leftrightarrow^?}
                        r_{a,b} \vee r_{b,a}) \\
    & & \wedge (\bigwedge_{a \in A^?}(\neg y_a \rightarrow (\neg x_a
        \wedge \bigwedge_{(a,b) \in R^?} \neg r_{a,b} \wedge
        \bigwedge_{(b,a) \in R^?} \neg r_{b,a})))
  \end{array}
\]

Then, a formula encoding the semantics is given, for instance:
\[
  \phi_{\cf}(\riaf) = \bigwedge_{(a,b) \in R \cup R^? \cup
    \leftrightarrow^?} (y_a \wedge y_b \wedge r_{a,b}) \rightarrow
  (\neg x_a \vee \neg x_b)
\]
encodes conflict-freeness: if both arguments appear in the completion,
as well as the attack between them, then they cannot be both
accepted. Similarly, $\phi_{\ad}$ and $\phi_{\stb}$ are provided in
\cite{NiskanenNJR20}, we show their adaptation to RIAFs:
\begin{itemize}
\item $\phi_{\ad}(\riaf) = \phi_{\cf}(\riaf) \wedge \bigwedge_{a \in A
    \cup A^?} \bigwedge_{(b,a) \in R \cup R^? \cup
    \leftrightarrow^?} ((x_a \wedge y_a \wedge y_b \wedge r_{b,a})
  \rightarrow z_b)$,
\item $\phi_{\stb}(\riaf) = \phi_{\cf}(\riaf) \wedge \bigwedge_{a \in
    A\cup A^?} (\neg x_a \wedge y_a \rightarrow z_a)$,
\end{itemize}
 where $z_a$ is a newly introduced variable for each argument $a \in A
 \cup A^?$, meaning that $a$ is defeated. This is formally encoded by
 $z_a \rightarrow \bigvee_{(b,a) \in R \cup R^? \cup \leftrightarrow^?} (x_b
 \wedge y_b \wedge r_{b,a})$.

$\phi_{ad}$ and $\phi_{\stb}$ can be used directly for solving problems
at the first level of the polynomial hierarchy, or as \NP{} abstraction
in CounterExample Guided Abstract Refinement algorithms. We refer the
interested reader to \cite{NiskanenNJR20} for more details.

\section{Conclusion}  \label{section:conclusion}
This report introduces Rich Incomplete Argumentation Frameworks, that
generalize IAFs by adding a new kind of uncertainty. We have shown
that this model is strictly more expressive than IAFs, but not at the
price of an increase of complexity. Moreover, a slight modification of
existing logical encodings allows to use the algorithms described in
the literature.

While complexity of reasoning with (R)IAFs has been well studied,
there are still many open questions regarding this formalism. For
instance, as far as we know, the only algorithms proposed (and
implemented) for IAFs concern the acceptance problems, as mentioned in
the Section~\ref{section:algorithm}. Other problems mentioned in
Section~\ref{subsection:background-iafs} have not been tackled
yet. Also, this study only considers the initial semantics defined by
Dung, but other semantics have not received interest in the context of
(R)IAFs, {\em e.g.} semi-stable \cite{CaminadaCD12}, stage
\cite{Verheij96} or ideal \cite{DungMT07} semantics.\\
Several works about the revision \cite{CMKMM2014a,CMKMM2014b}, the
update \cite{DoutreHP14} or the merging
\cite{Coste-MarquisDKLM07,DHKMRW2016} of AFs have faced the difficulty
to represent the uncertainty of the result as a single AF, and chose
to return a set of AFs as output. Let us recall that Partial AFs (that
are a special case of RIAFs with $A^? = \emptyset$ and
$\leftrightarrow^?$) were defined as a part of the process for
computing the merging of AFs \cite{Coste-MarquisDKLM07}, but did not
appear in the result of the operation. RIAFs might provide an
interesting solution in order to have a more compact output for these operations. Related
to these questions, the issue of realizability of extension sets
\cite{DunneDLW15} in the context of (R)IAFs is also interesting.



\bibliography{rich_iafs}

\end{document}